\theoremstyle{plain}
\newtheorem{theorem}{Theorem}[section]
\newtheorem{proposition}[theorem]{Proposition}
\theoremstyle{definition}
\newtheorem{definition}[theorem]{Definition}
\theoremstyle{remark}
\newcommand{\eat}[1]{}
\newcommand{\stitle}[1]{\vspace{1.6ex}\noindent{\bf #1}}
\newcommand{\ie}{\emph{i.e.,}\xspace}
\newcommand{\etc}{\emph{etc.}\xspace}
\newcommand{\bi}{\begin{itemize}}
\newcommand{\ei}{\end{itemize}}
\newcommand{\equref}[1]{{Equation~(\ref{#1})}}
\newcommand{\secref}[1]{{Section~\ref{#1}}}
\newcommand{\figref}[1]{{Figure~\ref{#1}}}
\newcommand{\tabref}[1]{{Table~\ref{#1}}}
\newcommand{\propref}[1]{{Proposition~\ref{#1}}}
\newcommand{\tabincell}[2]{
\begin{tabular}{@{}#1@{}}#2\end{tabular}
}
\newcommand{\expnumber}[2]{{#1}\mathrm{e}{#2}}
\newcommand{\normlap}{{\bm{\tilde{L}}}}
\newcommand{\model}{{LON-GNN}}
\newif \ifsourcecode
\icmltitlerunning{\model: Spectral GNNs with Learnable Orthonormal Basis}
\begin{document}

\twocolumn[
\icmltitle{\model: Spectral GNNs with Learnable Orthonormal Basis}

\icmlsetsymbol{equal}{*}

\begin{icmlauthorlist}
\icmlauthor{Qian Tao}{equal,ali}
\icmlauthor{Zhen Wang}{equal,ali}
\icmlauthor{Wenyuan Yu}{ali}
\icmlauthor{Yaliang Li}{ali}
\icmlauthor{Zhewei Wei}{ruc}
\end{icmlauthorlist}

\icmlaffiliation{ali}{Alibaba Group}
\icmlaffiliation{ruc}{Renmin University of China}

\icmlcorrespondingauthor{Wenyuan Yu}{wenyuan.ywy@alibaba-inc.com}
\icmlkeywords{Graph Neural Networks, spectral GNNs, Orthonormal Polynomials, Regularization}

\vskip 0.3in
]
\printAffiliationsAndNotice{\icmlEqualContribution} 

\begin{abstract}
In recent years, a plethora of spectral graph neural networks (GNN) methods have utilized polynomial basis with learnable coefficients to achieve top-tier performances on many node-level tasks. Although various kinds of polynomial bases have been explored, each such method adopts a fixed polynomial basis which might not be the optimal choice for the given graph. Besides, we identify the so-called over-passing issue of these methods and show that it is somewhat rooted in their less-principled regularization strategy and unnormalized basis. In this paper, we make the first attempts to address these two issues. Leveraging Jacobi polynomials, we design a novel spectral GNN, \model, with \textbf{L}earnable \textbf{O}rtho\textbf{N}ormal bases and prove that regularizing coefficients becomes equivalent to regularizing the norm of learned filter function now. We conduct extensive experiments on diverse graph datasets to evaluate the fitting and generalization capability of \model, where the results imply its superiority.
\end{abstract}

\section{Introduction}
\label{sec:intro}
Various kinds of graph neural network (GNN) models have been proposed in recent years, which are often motivated and designed from either spatial or spectral domains. The expressiveness of GNNs from a spatial viewpoint is often compared with the Weisfeiler-Lehman isomorphism test of various orders~\cite{iclr19powerful}. As for spectral GNNs, recent advances that make them adaptive filters allow them to fit arbitrary graph filters~\cite{corr20briginggap}. As a result, we have witnessed many successes achieved by spectral GNNs, notably the leading performances on many node-level tasks~\cite{icml22chebyrevisit}.

Despite their consistent expressiveness, different kinds of polynomial bases are explored with various motivations, such as their difference in convergence rate~\cite{icml22jacobi}. Researchers have gained insights into the pros and cons of different polynomial bases, but there is still a lack of principles for choosing appropriate ones. Hence, existing methods are instantiated with a specific polynomial basis, and practitioners usually have to seek a suitable choice from pre-defined candidates in a trial-and-error manner.

Meanwhile, no matter which polynomial basis is adopted, recently proposed spectral GNNs are designed to decouple the transformation of node features and the filtering of graph signals~\cite{nips22evennet}, where the coefficients of the polynomial basis and the weights responsible for feature transformation are trainable parameters. In this paper, we notice that, due to the analytical form that coefficients participate in the output of such a spectral GNN, it is likely to increase the magnitude of these coefficients to reduce the training loss trivially. We call this phenomenon ``over-passing'' and provide both empirical and theoretical evidence (see Section~\ref{sec:motivations}). Existing works regularize these coefficients by penalizing their $\ell_2$-norm, with unified strength for coefficients corresponding to different terms. However, the ultimate output is a weighted sum of those terms. Their contributions in terms of the scale of magnitude differ since existing spectral GNNs adopt polynomial bases that are not orthonormal (e.g., GPRGNN~\cite{iclr21gprgnn}) or even not orthogonal (e.g., BernNet~\cite{nips21bernnet}). Thus, penalizing all the coefficients with unified strength might be problematic.

Based on the above discussions, it is natural to raise two questions: (1) Is it possible to not only learn the coefficients but also, in the meantime, learn the suitable polynomial basis? (2) How can we properly regularize a spectral GNN by simply penalizing the $\ell_2$-norm of all the coefficients?

In this paper, we attempt to answer these questions by exploiting some properties of the Jacobi polynomials $P_{n}^{a,b}(\cdot)$, a class of orthogonal polynomials. Specifically, we propose a simple yet effective spectral GNN named \model, which is learned by jointly optimizing the trainable parameters of vanilla spectral GNNs as usual, as well as the parameters of Jacobi polynomials (i.e., $a$ and $b$) so that the choice of polynomial basis is also tuned during the training course. Besides, we normalize all the terms of the applied polynomial basis by the norm of each term. Then we theoretically show that regularizing all the coefficients with a unified strength would be equivalent to regularizing the norm of filter function, as the applied polynomial basis has become orthonormal. Noticeably, the norms used for normalization can be calculated analytically and is differentiable regarding the parameters of Jacobi polynomials. We conduct extensive comparisons, including fitting ground-truth filters and node classification tasks, to evaluate \model, where experimental results confirm the effectiveness of \model. Meanwhile, more in-depth empirical studies confirm the advantages of learnable and orthonormal polynomial basis.
\section{Preliminaries}
\label{sec:pre}
Let $\mathcal{G}=(\mathcal{V},\mathcal{E})$ denote an undirected graph, where $\mathcal{V}$ is a set of nodes with cardinality $n=|\mathcal{V}|$, and $\mathcal{E}$ is a set of edges.
The graph may also have features for each node, which are represented as a matrix $\bm{X}\in \Re^{n\times r}$, where $r$ is the dimension of the features. 
The adjacency matrix of the graph is represented by $\bm{A}$, such that $\bm{A}_{ij}=1$ if there is an edge between nodes $v_i$ and $v_j$, and $\bm{A}_{ij}=0$ otherwise. 
The degree matrix $\bm{D}$ is a diagonal matrix, where $\bm{D}_{ii}$ is the degree of node $v_i$, and $\bm{D}_{ij}=0$ for all $i\neq j$.
Graph Laplacian is then defined as $\bm{L}=\bm{D}-\bm{A}$.
We follow the convention to focus on its normalized counterpart $\normlap=\bm{D}^{-\frac{1}{2}}\bm{L}\bm{D}^{-\frac{1}{2}}=\bm{I}-\bm{D}^{-\frac{1}{2}}\bm{A}\bm{D}^{-\frac{1}{2}}$.
Considering node classification tasks, there is often a label matrix $\bm{Y}\in\Re^{n\times c}$, where $c$ is the number of classes, and each $i$-th row is a one-hot vector indicating which class $v_i$ belongs to.

Regarding $\bm{X}$ as graph signals, the graph filtering operation is defined as $\bm{Z}=\sigma(\bm{U}g(\Lambda)\bm{U}^{\text{T}})$, where $\Lambda$ denotes the diagonal eigenvalue matrix of $\normlap$, and $\sigma(\cdot)$ represents a normalization function. To avoid the computationally expensive eigendecomposition, most existing spectral GNN methods employ polynomials to approximate graph filter $g(\Lambda)$~\cite{iclr21gprgnn,nips21bernnet,icml22jacobi,icml22chebyrevisit,nips22evennet}:
\[
\bm{U}g(\Lambda)\bm{U}^{\text{T}}\bm{X}\approx\bm{U}(\sum_{k=0}^{K}\alpha_{i}\Lambda^{k})\bm{U}^{\text{T}}\bm{X}=\sum_{k=0}^{K}\alpha_{k}\normlap^{k}\bm{X},
\]
where $\{\alpha_k\}$ are coefficients.
These methods treat $\{\alpha_k\}$ as trainable parameters and thus can restate the graph filter with the propagation matrix $\bm{P}=\bm{I}-\normlap=\bm{D}^{\frac{1}{2}}\bm{A}\bm{D}^{-\frac{1}{2}}$ as:
\begin{equation}
g(\Lambda)=g(\normlap)=\sum_{k=0}^{K}\alpha_{k}\normlap^{k}=\sum_{k=0}^{K}\alpha_{k}P_{k}(\bm{P}),
\label{eq:polyapproxgf}
\end{equation}
where $\{P_{k}(\cdot)\}$ is a polynomial basis. Research works in this line have explored various kinds of polynomial bases.

Then, for the purpose of node classification, the output of a spectral GNN has the following form:
\begin{equation}
\bm{\hat{Y}}=f_{\alpha,\theta}(\bm{X},\normlap)=g_{\alpha}(\normlap)t_{\theta}(\bm{X}),
\end{equation}
where $g_\alpha(\normlap)$ is implemented based on the right-hand side of Eq.~\eqref{eq:polyapproxgf}, and $t(\cdot)$ is a linear or an MLP model that takes $\bm{X}$ as input and outputs the embeddings of shape $n\times c$.
Hence, the trainable parameters of such a spectral GNN include both $\{\alpha_k\}$ and the model parameters for transformation $\theta$.
They are learned by minimizing a loss function $\mathcal{L}(\cdot,\cdot)$, e.g., Cross-entropy loss, defined over $\bm{\hat{Y}}$ and $\bm{Y}$.
\vspace{-0.05in}
\section{Motivations}
\label{sec:motivations}
Existing research works on spectral GNN mainly focus on the study of expressiveness~\cite{iclr21gprgnn,nips21bernnet} and optimization~\cite{icml21optim-gnn,icml22jacobi}, while how to regularize the GNN model has not been systematically investigated. They either penalize the $\ell_2$-norm of polynomial coefficients with a unified strength or ignore regularization. Therefore, we present our observations and analysis of such a regularization, which motivate us to propose \model.

\subsection{Learning Filters without Regularization: The Over-Passing Issue}
\label{subsec:motivation-overpass}
Generally, a graph filter is defined as a function that maps from $[0, 2]$ to $[0, 1]$, where the eigenvalues of $\normlap$ lie in that domain.
However, we observe that, without regularization, the output values of a learned filters are likely to dramatically increase, leading to the so-called over-passing issue.

\begin{figure}[t!]
    \centering
    \begin{subfigure}[b]{0.22\textwidth}
           \centering
           \includegraphics[width=\textwidth,trim=4 6 40 10,clip]{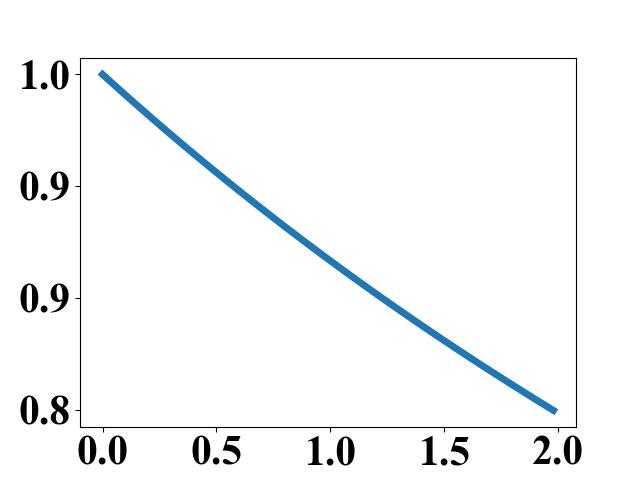}
            \caption{Filter at epoch=0.}
            \label{subfig:filter-eph40}
    \end{subfigure}
    \hspace{4pt}
    \begin{subfigure}[b]{0.22\textwidth}
            \centering
            \includegraphics[width=\textwidth,trim=4 6 40 10,clip]{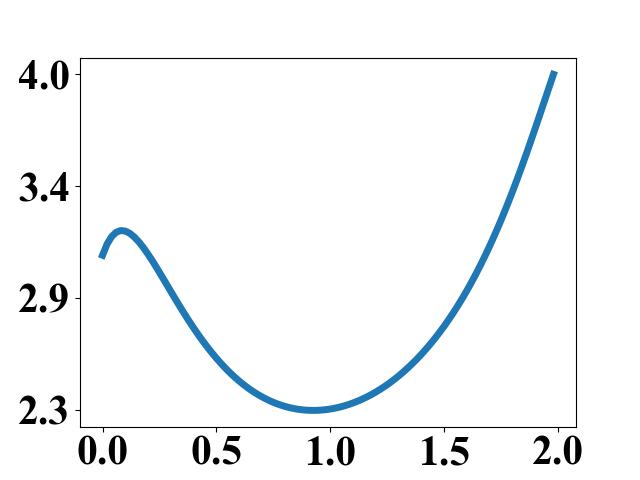}
            \caption{Filter at epoch=80.}
            \label{subfig:filter-eph80}
    \end{subfigure}
    \hfill
    \begin{subfigure}[b]{0.22\textwidth}
            \centering
            \includegraphics[width=\textwidth,trim=4 6 40 10,clip]{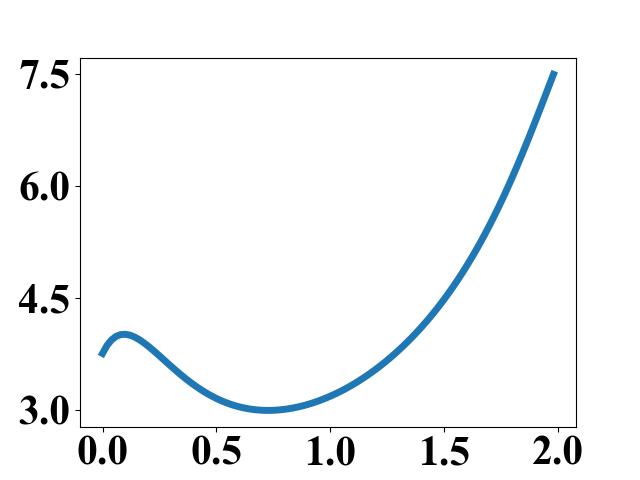}
            \caption{Filter at epoch=120.}
            \label{subfig:filter-eph120}
    \end{subfigure}
    \hspace{4pt}
    \begin{subfigure}[b]{0.22\textwidth}
        \centering
        \includegraphics[width=\textwidth,trim=4 6 40 10,clip]{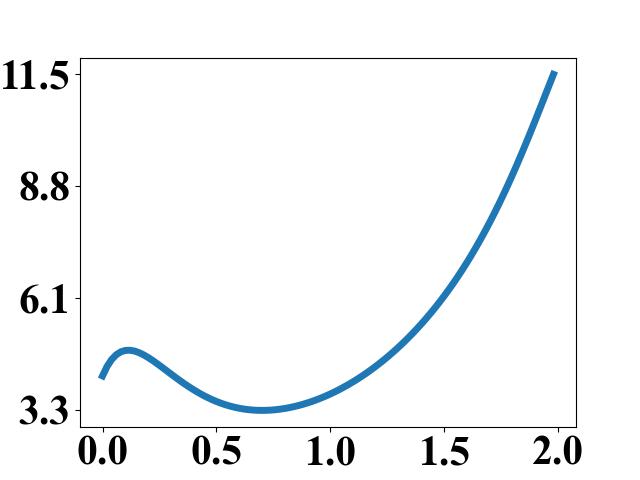}
        \caption{Filter at epoch=160.}
        \label{subfig:filter-eph160}
    \end{subfigure}
    \vspace{-0.1in}
     \caption{Learned graph filters at different epochs.}
    \label{fig:filter-ephs}
    \vspace{-0.2in}
\end{figure}

\stitle{The Over-Passing Issue.}
We present the filters learned by GPRGNN~\cite{iclr21gprgnn} on the Cornell dataset at different epochs in \figref{fig:filter-ephs}, from which we attain the following intuitive observations:
i) at the late stage of the training process (from epoch $=120$ to $160$), the \emph{waveform}~\cite{smith1997guidetodsp} (\ie the shape) of the filter remains stable but its \emph{amplitude} (\ie the range of the output) continues to increase; ii) even at the early stage when the model is learning the filter (from epoch $=0$ to $80$), the amplitude of the filter still increases; iii) starting from the $80$-th epoch, the output values lie in an invalid range, namely, become larger than one for the whole spectrums.
These observations indicate that the amplitude of the learned graph filter keeps increasing and does not contribute to the learning of desired filters.

We speculate that such an over-passing issue is rooted in an updating direction in the parameter space where trivially increasing the absolute value of polynomial coefficients can reduce the training loss, but the learned filter is not improved.
Then we formalize this relationship as follows:

\begin{proposition}
    \label{prop:over-passing}
    Assume a spectral GNN $\bm{\hat{Y}}=g_{\alpha}(\normlap)t_{\theta}(\bm{X})$ with cross entropy loss $\mathcal{L}$.
    Suppose currently the model has loss $l=\mathcal{L}(\bm{\hat{Y}}, \bm{Y})$ where $\bm{Y}$ is the ground truth of the task.
    There exists a spectral GNN $ \bm{\hat{Y}'}=g_{\alpha'}(\normlap)t_{\theta'}(\bm{X})$ satisfying:
    \begin{itemize}
        \item $\theta'$ equals $\theta$;
        \item The output $\bm{\hat{Y}'}$ results in the same prediction as $\bm{\hat{Y}}$;
        \item The loss for nodes that are predicted correctly decreases.
    \end{itemize}
\end{proposition}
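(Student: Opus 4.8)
The plan is to produce $\bm{\hat{Y}}'$ by rescaling the polynomial coefficients while freezing the feature-transformation parameters, so that the filter's \emph{waveform} is untouched and only its \emph{amplitude} changes. Fix any constant $c > 1$ and set $\theta' = \theta$ and $\alpha'_k = c\,\alpha_k$ for all $k$. Since the filter in Eq.~\eqref{eq:polyapproxgf} is linear in its coefficients, $g_{\alpha'}(\normlap) = \sum_{k=0}^{K} c\,\alpha_k P_k(\bm{P}) = c\, g_{\alpha}(\normlap)$, hence $\bm{\hat{Y}}' = g_{\alpha'}(\normlap)\, t_{\theta'}(\bm{X}) = c\,\bm{\hat{Y}}$. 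This $\bm{\hat{Y}}'$ is, by construction, the output of a spectral GNN of the assumed form, so the existence claim is immediate and the first bullet $\theta' = \theta$ holds by definition. For the second bullet, the prediction at node $v_i$ is $\argmax_j (\bm{\hat{Y}})_{ij}$, and scaling a logit vector by the positive constant $c$ does not move the location of its maximal entry, so $\bm{\hat{Y}}' = c\,\bm{\hat{Y}}$ yields exactly the same per-node predictions.

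For the third bullet I would show that the cross-entropy contribution of each correctly classified node is strictly decreasing in $c$. Let $s_i$ abbreviate the logit row $(\bm{\hat{Y}})_{i,\cdot}$ and let $c_i$ be the true (hence also predicted) class of $v_i$, so that $s_{ij} - s_{i c_i} \le 0$ for all $j$, with strict inequality for at least one $j \ne c_i$. After rescaling, the softmax probability assigned to the true class is $p = \bigl(\sum_j \exp\!\big(c\,(s_{ij} - s_{i c_i})\big)\bigr)^{-1}$; every summand with $j \ne c_i$ is nonincreasing in $c$ and at least one is strictly decreasing, so the denominator strictly decreases, $p$ strictly increases, and the node's loss $-\log p$ strictly decreases as soon as $c > 1$. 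Summing over the correctly classified nodes then gives the claimed decrease.

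There is essentially no obstacle here; the only point that deserves care is the softmax monotonicity under logit scaling sketched above, together with the degenerate node whose logits are all equal — such a node has no strict prediction under either $\bm{\hat{Y}}$ or $\bm{\hat{Y}}'$, is therefore not among the correctly predicted nodes, and can simply be excluded from the statement. If one wants the over-passing phenomenon to be quantitative rather than merely directional, one can further let $c \to \infty$: the loss at every correctly classified node then tends to $0$ while the filter amplitude $\sup_{\lambda \in [0,2]} |c\,g_\alpha(\lambda)|$ diverges, matching the picture in Figure~\ref{fig:filter-ephs}; but that conclusion is stronger than what the proposition requires.
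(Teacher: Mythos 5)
Your proposal is correct and follows essentially the same route as the paper's own proof: rescale the coefficients by a common factor greater than one, use linearity of the filter in its coefficients to get $\bm{\hat{Y}}'=c\bm{\hat{Y}}$, observe that positive scaling preserves the argmax, and show the cross-entropy of each correctly classified node decreases because the nonpositive logit gaps become more negative. Your added care about the degenerate all-equal-logits case (needed for the strict decrease) is a small refinement the paper glosses over, but it does not change the argument.
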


\begin{proof}
    Letting $\theta'=\theta$, we have $t_{\theta}(\bm{X})=t_{\theta'}(\bm{X})$.
    Assume $\alpha_i$ the coefficients and $\{P_0(\lambda),P_1(\lambda),...,P_K(\lambda)\}$ the polynomial basis of $g_\alpha(\cdot)$, \ie $g_{\alpha}(\lambda)=\sum_{i=0}^K\alpha_iP_i(\lambda)$.
    We enlarge the coefficients with the same factor $q>1$ and choose the enlarged values as the coefficients $\{\alpha_{i}'\}$, \ie $\alpha_i'=q\alpha_{i},i=0,\ldots,K$ and $g_{\alpha'}(\lambda)=\sum_{i=0}^K\alpha_i'P_i(\lambda)$.
    As a result, each element of $g_{\alpha'}(\normlap)$ will be enlarged by $q$ and $\bm{\hat{Y}'}$ will result in the same prediction as $\bm{\hat{Y}}$.
    
    Then we analyze the variation of the loss of the nodes that are predicted correctly.
    Let $\mathcal{V}^{+}$ be the nodes that are predicted correctly by $\bm{\hat{Y}}$ and $\bm{\hat{Y}'}$.
    The loss of $\bm{\hat{Y}}$ on $\mathcal{V}^{+}$ is
    \begin{align}
        &l=\sum_{v\in \mathcal{V}^{+}} \mathcal{L}(\bm{\hat{Y}}_{v},\bm{Y}_{v}) \notag \\
        &=\sum_{v\in \mathcal{V}^{+}}-\log \frac{e^{\bm{\hat{Y}}_{v,\bm{Y}_{v}}}}{\sum_{j=1}^c e^{\bm{\hat{Y}}_{v,j}}} \notag \\
        &=\sum_{v\in \mathcal{V}^{+}}\log (\sum_{j=1}^c e^{\bm{\hat{Y}}_{v,j}-\bm{\hat{Y}}_{v,\bm{Y}_{v}}}),
    \end{align}
    where $\bm{\hat{Y}}_{i,j}$ and $\bm{Y}_i$ are the $(i,j)$-th element of $\bm{\hat{Y}}$ and the $i$-th row of $\bm{Y}$, respectively.
    For the output $\bm{\hat{Y}'}$, we have $\bm{\hat{Y}'}=q\bm{\hat{Y}}$.
    Similarly, the loss of $\bm{\hat{Y}}$ on $\mathcal{V}^{+}$ becomes
    \begin{align}
        l'&=\sum_{v\in \mathcal{V}^{+}}\log (\sum_{j=1}^c e^{\bm{\hat{Y}'}_{v,j}-\bm{\hat{Y}'}_{v,\bm{Y}_{v}}}) \notag \\
        &=\sum_{v\in \mathcal{V}^{+}}\log (\sum_{j=1}^c e^{q(\bm{\hat{Y}}_{v,j}-\bm{\hat{Y}}_{v,\bm{Y}_{v}})}).
    \end{align}
    Since each node in $\mathcal{V}^{+}$ is predicted correctly, we have $\bm{\hat{Y}}_{v,j}-\bm{\hat{Y}}_{v,\bm{Y}_{j}}\leq 0$ for any $v\in \mathcal{V}^{+}$ and $j=0,1,...,c$.
    Also noticing that $q>1$, we have $l'<l$.
\end{proof}

Note that the conclusion of \propref{prop:over-passing} could be applied to the practical spectral GNNs with nonlinearity and multiple layers like \cite{nips21bernnet,icml22jacobi}. 
It indicates that when a considerable number of nodes are predicted correctly, the model may be updated towards the direction that the accuracy has slight change but the absolute values of polynomial coefficients grow continuously.
Besides, since the coefficients multiplied by a shared factor do not change the waveform of the learned filter, the over-passing issue makes most of the gradient occupied for increasing polynomial coefficients. Such a learning process neither contributes to learning the waveform of ground-truth filters, nor produces improvement on the performance of the model, which we name the over-passing issue.

\subsection{Learning Filters with $\ell_2$-Norm Penalization: An Unequal Regularization}
\label{subsec:motivation-l2norm}
{The over-passing issue indicates that the \emph{output} of the filter functions may increase unlimitedly while training the spectral GNNs.}
Strictly requiring the range of graph filter functions within $[0,1]$ can be nontrivial for spectral GNNs.
A natural way is to relax the requirement and penalize the norm of the filter function, that is to say, limit the output values of filters via regularization.
Some works regularize the spectral GNN model by penalizing the $\ell_2$-norm of the polynomial coefficients with a unified strength.
Then the model is often learned by solving the following optimization problem:
\begin{align}
\label{equ:regu-l2norm}
\min_{\alpha,\theta}\mathcal{L}(f_{\alpha,\theta}(\bm{X},\normlap),\bm{Y})+\sum_{i=0}^K \alpha_i^2.
\end{align}
To keep brevity, here we omit the $\ell_2$-norm for regularizing other trainable parameters, i.e., $\theta$.

This $\ell_2$-norm penalty can alleviate the over-passing issue to some extent.
However, as we will show, regularizing a graph filter in such a way cannot equally regularize its output values.
Inspired by the Rooted-Mean-Square amplitude (RMS amplitude)~
\cite{smith1997guidetodsp} of the filters, we define the norm of a polynomial based on the inner product of two functions, which could be regarded as the average output value of a graph filtering function in an interval.
\begin{definition}[Norm]
    \label{def:norm}
    Given a polynomial $p(x)$, the norm of $p(x)$ in interval $[l,u]$ is 
    $$||p(x)||=\sqrt{\int_l^u p^2(x)w(x)\mathrm{d}x}\quad,$$
    where $w(x)$ is the weight function that determines the preference on values in $[l,u]$.
\end{definition}

For a spectral GNN, its polynomial function $g(\cdot)$ is applied to the propagation matrix $\bm{P}$ (see \equref{eq:polyapproxgf}), and it is well known that the eigenvalues of $\bm{P}$ are in the range $[-1, 1]$.
Thus, we use $g(\lambda)$, $g(\bm{P})$, and $g(\normlap)$ exchangeably, where $\lambda\in[-1,1]$ denotes the eigenvalues of $\bm{P}$.
Meanwhile, \textit{w.l.o.g.}, we consider $[l, u] = [-1, 1]$ as the integral interval in the remaining of this paper.

To regularize the output values of the filtering function, we can add a penalty term in the loss function:
\begin{align}
\label{equ:regu-filter-raw}
    \min_{\alpha,\theta} \mathcal{L}(f_{\alpha,\theta}(\bm{X},\normlap),\bm{Y})+||g_{\alpha}(\lambda)||^{2},
\end{align}
where $g_{\alpha}(\lambda)=\sum_{i=0}^K \alpha_i P_i(\lambda)$ with trainable coefficients $\alpha_0,...,\alpha_K$ and polynomial basis $P_0(\lambda),P_1(\lambda),...,P_K(\lambda)$.

The integral form of the regularization term $||g_{\alpha}(\lambda)||^2$ makes it difficult to calculate the gradient of the coefficients.
\eat{
\begin{align}
\label{equ:regu-raw-to-orth}
||g_{\alpha}(x)||^2&=\int_{-1}^1(\sum_{i=0}^K\alpha_iP_i(\lambda))^2w(\lambda)\mathrm{d}\lambda \\
&=\sum_{i=0}^K\int_{-1}^1\alpha_i^2P_i^2(\lambda)w(\lambda)\mathrm{d}\lambda \notag \\
\text{ }&+2\sum_{i<j}\int_{-1}^1\alpha_i\alpha_jP_i(\lambda)P_j(\lambda)w(\lambda)\mathrm{d}\lambda \quad.\notag
\end{align}
}
However, if we utilize the orthogonal polynomials that share the same weight function as the definition of norm, \ie the polynomials satisfying
$$\int_{-1}^1 P_i(\lambda)P_j(\lambda)w(\lambda)\mathrm{d}\lambda=0 \text{ } (i\neq j),$$
according to the generalized Pythagorean theorem in inner product spaces~\cite{van2005classification}, the regularization term can be simplified as
\begin{align}\label{equ:func-norm-orth}
||g_{\alpha}(\lambda)||^2=\sum_{i=0}^K\int_{-1}^1\alpha_i^2P_i^2(\lambda)w(\lambda)\mathrm{d}\lambda=\sum_{i=0}^K\alpha_i^2||P_{i}(\lambda)||^2.
\end{align}
As a result, the regularization on filtering functions (\ie \equref{equ:regu-filter-raw}) becomes 
\begin{align}\label{equ:regularization}
    \min_{\alpha,\theta}\mathcal{L}(f_{\alpha,\theta}(\bm{X},\normlap),\bm{Y})+\sum_{i=0}^K \alpha_i^2||P_i(\lambda)||^2.
\end{align}

\vspace{-6pt}
\stitle{Unequality of $\ell_2$-Norm.}
Some recent spectral GNN works~\cite{icml22chebyrevisit,icml22jacobi} have adopted orthogonal polynomials and regularized the polynomial coefficients in the way of \equref{equ:regu-l2norm}. Although that regularization term has a similar form to penalizing the norm of the filtering function (\ie ~\equref{equ:regularization}), regularizing just the trainable coefficients cannot equally regularize the output values of the filtering function due to the unnormalized polynomial terms, and thus the over-passing issue is solved in an improper way.
In other words, they are not equivalent, as illustrated in \figref{fig:regular-filter-values}. Specifically, we consider a basis of Jacobi polynomials up to degree 2, where the norm of each term (i.e., $\|P_{i}(\lambda)\|,i=0,1,2$) is calculated in advance. We choose the polynomial coefficients $\alpha_i,i=0,1,2$ located on a sphere with a radius of 5, and thus their $\ell_2$-norm $\sum_{i=0}^2 \alpha_i^2$ always has a value of 25. Then we use the color to represent how large the regularization term in \equref{equ:regularization} is. As can be seen, although the regularization on coefficients keeps unchanged on the sphere, the regularization term of filter norm, i.e., $\sum_{i=0}^2 \alpha_i^2 ||P_i(\lambda)||^2$, has different values ranging from $20$ to $40$.

\begin{figure}[t!]
    \centering
    \includegraphics[width=0.25\textwidth,trim=80 40 40 55,clip]{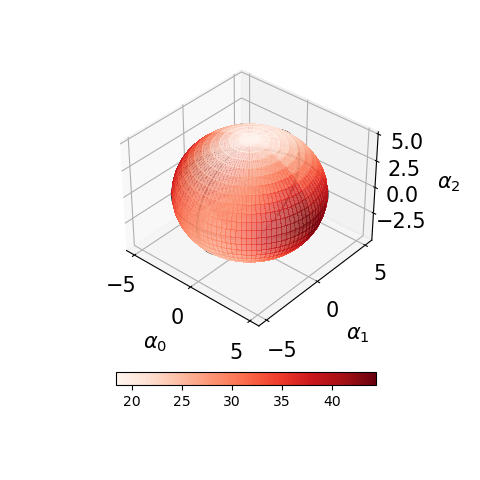}
    \caption{Color map of the regularization for filter values.}
    \label{fig:regular-filter-values}
    \vskip -0.2in
\end{figure}
\vspace{-2pt}
\section{{\model}: Spectral GNNs with Learnable Orthonormal Basis}
In this section, we propose 
{\model} (GNN with \underline{L}earnable {\underline{O}rtho{\underline{N}ormal basis}), which not only utilize the orthonormal polynomials as the basis of the spectral GNNs, but also learn the proper basis via trainable polynomials.

\subsection{From Orthogonal to Orthonormal Basis}
\label{subsec:lon-orthnorm}
The discussion in \secref{sec:motivations} inspires us to design the spectral graph neural networks based on the \textit{orthonormal polynomials}.
When an orthonormal basis $\{P^*_0(\lambda),P^*_1(\lambda),...,P^*_K(\lambda)\}$ is utilized in spectral GNNs, the regularization on coefficients is equivalent to the regularization on filters.
As a result, an $\ell_2$-norm for coefficients is enough for regularizing the filters.
\begin{theorem}
    For a spectral GNN $f_{\alpha,\theta}(\bm{X},\normlap)=g^*_\alpha(\normlap)t_\theta(\bm{X})$ with orthonormal polynomials $\{P_0^*(\lambda),P_1^*(\lambda),...,P_K^*(\lambda)\}$, a regularization on the coefficients of the polynomials is equivalent to the regularization on the filters.
\end{theorem}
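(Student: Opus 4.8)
The plan is to reduce the statement directly to the computation already carried out in \equref{equ:func-norm-orth}. Recall that the basis $\{P_0^*(\lambda),\ldots,P_K^*(\lambda)\}$ being \emph{orthonormal} with respect to the weight function $w(\lambda)$ on $[-1,1]$ means precisely that $\int_{-1}^1 P_i^*(\lambda)P_j^*(\lambda)w(\lambda)\mathrm{d}\lambda = \delta_{ij}$; in particular, in the sense of \defref{def:norm}, $\|P_i^*(\lambda)\|^2 = 1$ for every $i$. So the first step is to write the learned filter as $g_\alpha^*(\lambda)=\sum_{i=0}^K \alpha_i P_i^*(\lambda)$ and expand $\|g_\alpha^*(\lambda)\|^2$ via the generalized Pythagorean theorem in inner product spaces, exactly as in \secref{subsec:motivation-l2norm}: all cross terms $\int_{-1}^1 \alpha_i\alpha_j P_i^*(\lambda)P_j^*(\lambda)w(\lambda)\mathrm{d}\lambda$ with $i\neq j$ vanish by orthogonality.

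The crux is then the second step: substituting $\|P_i^*(\lambda)\|^2 = 1$ into \equref{equ:func-norm-orth} yields $\|g_\alpha^*(\lambda)\|^2 = \sum_{i=0}^K \alpha_i^2$. That is, the norm of the filter function — the weighted RMS amplitude of the filter over the spectrum — coincides exactly with the squared $\ell_2$-norm of the coefficient vector. Finally, I would plug this identity into the two regularized objectives: the filter-norm-penalized problem \equref{equ:regu-filter-raw} becomes $\min_{\alpha,\theta}\mathcal{L}(f_{\alpha,\theta}(\bm{X},\normlap),\bm{Y})+\sum_{i=0}^K \alpha_i^2$, which is identical to the coefficient-penalized problem \equref{equ:regu-l2norm}. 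Since both regularizers induce the same optimization problem, they are equivalent, which is the claim. (One could further add a constant or a $\lambda$ coefficient to the penalty without affecting the argument.)

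There is no real obstacle here beyond one bookkeeping point worth stating explicitly at the outset: the weight function appearing in \defref{def:norm} must be the \emph{same} $w(\lambda)$ for which $\{P_i^*\}$ is orthonormal — this is precisely the hypothesis under which \equref{equ:func-norm-orth} was established, so the Pythagorean step carries over verbatim. Making this alignment explicit removes any ambiguity and the rest is a one-line substitution.
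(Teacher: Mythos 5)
Your proposal is correct and follows essentially the same route as the paper's own proof: expand $\|g^*_\alpha(\lambda)\|^2$ via the Pythagorean identity of \equref{equ:func-norm-orth}, use orthonormality to set $\|P_i^*(\lambda)\|^2=1$, and conclude that the filter-norm penalty equals the $\ell_2$ penalty on coefficients. Your explicit remark that the weight function in \defref{def:norm} must match the one under which the basis is orthonormal is a useful clarification the paper leaves implicit, but it does not change the argument.
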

\vspace{-8pt}
\begin{proof}
    Orthonormal polynomials satisfy for any $i,j\in\{0,\ldots,K\},$
    \begin{align}
        &\int_{-1}^1 (P_i^*(\lambda))^2w(\lambda)\mathrm{d}\lambda=1 \notag \\
        &\int_{-1}^1 P_i^*(\lambda)P_j^*(\lambda)w(\lambda)\mathrm{d}\lambda=0 \text{ } (i\neq j). 
    \end{align}
    A similar reduction following \equref{equ:func-norm-orth} indicates that 
    \vspace{-4pt}
    $$||g^*_\alpha(\lambda)||^2=\sum_{i=0}^K \alpha_i^2||P_i^*(x)||^2=\sum_{i=0}^K\alpha_i^{2}$$
    which is exactly the square of $\ell_2$-norm of coefficients.
\end{proof}
\vspace{-16pt}
Finally, the proposed model can be represented by 
\begin{align}\label{equ:ton-prop}
\bm{\hat{Y}}=f_{\alpha,\theta}(\bm{X},\normlap)=g^*_{\alpha}(\normlap)t_{\theta}(\bm{X}) 
\end{align}
with regularization
\vspace{-10pt}
\begin{align}
\label{equ:ton-wd}
\min_{\alpha,\theta}\mathcal{L}(f_{\alpha,\theta}(\bm{X},\normlap),\bm{Y})+\sum_{i=0}^K \alpha_i^2.
\end{align}

\begin{table}[t]
    \caption{Common Orthogonal Polynomials and Norms.}
    \label{tab:orth-poly}
    \vspace{-6pt}
    \begin{center}
    \begin{tabular}{p{1.2cm}cc}
    \toprule
    Poly & $w(\lambda)$ & $||P_i(\lambda)||^2$  \\
    \midrule
    Chebyshev    & $1/\sqrt{1-\lambda^2}$ &$
        \begin{cases}
             \pi \text{ if }i=0 \\
             \frac{\pi}{2} \text{otherwise} 
        \end{cases}$\\
    Legendre & 1 & 1/(2i+1)\\
    Jacobi    & $(1-\lambda)^a(1+\lambda)^b$ & $\frac{2^{a+b+1}\Gamma(i+a+1)\Gamma(i+b+1)}{(2i+a+b+1)\Gamma(i+a+b+1)i!}$ \\
    \bottomrule
    \end{tabular}
    \end{center}
    \vskip -0.3in
    \end{table}

\stitle{Computation of Orthonormal Polynomials.}
The orthonormal polynomials can be generated from orthogonal polynomials by dividing each basis to its norm.
The weight functions and norm of common orthogonal polynomials can be found in \tabref{tab:orth-poly}.

\begin{theorem}[\cite{book2011orth}]
    Given a set of orthogonal polynomials $\{P_0(\lambda),P_1(\lambda),...,P_K(\lambda)\}$, the set of polynomials $\{P_0^*(\lambda),P_1^*(\lambda),...,P_K^*(\lambda)\}$ is orthonormal with
    $$P_i^*(\lambda)=\frac{P_i(\lambda)}{||P_i(\lambda)||}.$$
\end{theorem}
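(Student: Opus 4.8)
The plan is to unwind the definitions and reduce the statement to a one-line computation. The norm of \defref{def:norm} is the one induced by the bilinear form $\langle f,g\rangle \coloneqq \int_{-1}^{1} f(\lambda)g(\lambda)w(\lambda)\,\mathrm{d}\lambda$, so that $\|p\|^2=\langle p,p\rangle$, and orthogonality of $\{P_0(\lambda),\ldots,P_K(\lambda)\}$ (with the same weight $w$) means exactly $\langle P_i,P_j\rangle=0$ whenever $i\neq j$. So everything is a matter of manipulating this bilinear form.

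First I would check that each normalized polynomial $P_i^*=P_i/\|P_i\|$ is well defined, i.e. that $\|P_i\|>0$. Since $w$ is a bona fide weight function — strictly positive on the interior of the integration interval for each family in \tabref{tab:orth-poly} — and $P_i$ is a nonzero polynomial, hence nonzero off a finite set, the integrand $P_i^2(\lambda)\,w(\lambda)$ is nonnegative and positive on a set of positive measure; therefore $\|P_i\|^2=\int_{-1}^{1}P_i^2(\lambda)w(\lambda)\,\mathrm{d}\lambda>0$. (Equivalently, one can simply read off strict positivity from the closed-form expressions for $\|P_i(\lambda)\|^2$ in \tabref{tab:orth-poly}.)

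Next, using bilinearity of $\langle\cdot,\cdot\rangle$ to pull out the scalar factors,
\[
\langle P_i^*,P_j^*\rangle=\frac{1}{\|P_i\|\,\|P_j\|}\,\langle P_i,P_j\rangle .
\]
For $i\neq j$ the right-hand side is $0$ by orthogonality of the $\{P_i\}$, giving $\int_{-1}^{1}P_i^*(\lambda)P_j^*(\lambda)w(\lambda)\,\mathrm{d}\lambda=0$. For $i=j$ it equals $\langle P_i,P_i\rangle/\|P_i\|^2=\|P_i\|^2/\|P_i\|^2=1$, i.e. $\int_{-1}^{1}(P_i^*(\lambda))^2w(\lambda)\,\mathrm{d}\lambda=1$. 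These are precisely the two orthonormality conditions used in the previous theorem, so $\{P_0^*(\lambda),\ldots,P_K^*(\lambda)\}$ is orthonormal, which finishes the argument.

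The computation is immediate and there is no real obstacle; the only place that needs any care is the well-definedness step $\|P_i\|\neq 0$, which is exactly where the hypothesis that $w$ is a genuine (almost-everywhere positive) weight on the integration interval is invoked. For the Chebyshev/Legendre/Jacobi weights on $[-1,1]$ this holds automatically — and, in the Jacobi case, one should note the mild parameter restriction $a,b>-1$ that makes the weight integrable and the norm formula in \tabref{tab:orth-poly} finite and positive — so the statement goes through verbatim.
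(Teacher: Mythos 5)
Your proof is correct: the paper itself gives no argument for this theorem (it is quoted directly from the cited reference), and your derivation is exactly the standard one — scale each $P_i$ by $1/\|P_i\|$, use bilinearity of the inner product $\langle f,g\rangle=\int_{-1}^{1}f(\lambda)g(\lambda)w(\lambda)\,\mathrm{d}\lambda$ to get $\langle P_i^*,P_j^*\rangle=\langle P_i,P_j\rangle/(\|P_i\|\,\|P_j\|)$, which is $0$ for $i\neq j$ and $1$ for $i=j$. Your attention to well-definedness ($\|P_i\|>0$, requiring $a,b>-1$ for the Jacobi weight) is a worthwhile addition that the paper leaves implicit.
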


In practice, we choose the orthonormal Jacobi polynomials as basis, because:
i) a large number of orthogonal polynomials, like Chebyshev and Legendre polynomials in \tabref{tab:orth-poly} and the Zernike polynomials~\cite{lakshminarayanan2011zernike}, can be regarded as a special case of Jacobi polynomial by choosing corresponding values of $a$ and $b$; ii) as discussed in \secref{subsec:motivation-l2norm}, we could use the parameters $a$ and $b$ of Jacobi polynomials to define the preference to the signals with different values.
The recurrence relation of orthonormal Jacobi polynomials can be defined as:
\begin{align}
    \label{equ:ton-poly}
    &P_0(\normlap)=1;\ P_1(\normlap)={0.5a-0.5b}+(0.5a+0.5b+1)\normlap \notag \\
    &P_i(\normlap)=(2i+a+b-1) \notag \\
    &\cdot\frac{(2i+a+b)(2i+a+b-2)\normlap+a^2-b^2}{2i(i+a+b)(2i+a+b-2)}P_{i-1}(\normlap) \notag \\
    &-\frac{(i+a-1)(i+b-1)(2i+a+b)}{ i(i+a+b)(2i+a+b-2)}P_{i-2}(\normlap) \notag \\
    &||P_i(\lambda)||=\sqrt{\frac{2^{a+b+1}\Gamma(i+a+1)\Gamma(i+b+1)}{(2i+a+b+1)\Gamma(i+a+b+1)i!}}\notag \\
    &P_i^*(\normlap)=\frac{P_i(\normlap)}{||P_i(\lambda)||}. 
\end{align}

\equref{equ:ton-prop}, \equref{equ:ton-wd}, and \equref{equ:ton-poly} together compose the spectral GNN based on orthonormal basis.

\subsection{Learnable Orthonormal Basis}
\label{subsec:lon-learnable}
As discussed in \secref{subsec:motivation-overpass}, the weight function of the norm of a polynomial in fact represents the preference to the values distributed on $[-1,1]$.
Consider the Jacobi polynomial, whose weight function is $(1-\lambda)^a(1+\lambda)^b$.
For example, if $a=b=0$ such that the polynomial degenerates to the Legendre polynomial, the weight function becomes a constant $1$, which means the outputs of the polynomial in $[-1,1]$ are equally treated.
However, when $a>0$ and $b=0$, the weight function will be $(1-\lambda)^a$ and monotonically decrease.
In such a case, the polynomial represents a preference for smaller values close to $-1$.
    
The above discussion inspires us to learn the orthonormal basis by making the parameters $a,b$ in Jacobi polynomials trainable.
In each epoch, the model forward propagates based on \equref{equ:ton-prop} and \equref{equ:ton-poly}, and updates $a,b$ and other parameters.
This indicates that in every epoch, {\model} formalizes a different orthonormal Jacobi polynomial based on different $a$ and $b$.

\stitle{Differentiability.}
The derivation of the Gamma function in the norm of Jacobi polynomials can be obtained by the Polygamma function:
$$\Gamma'(\lambda)=\Gamma(\lambda)\psi^{(0)}(\lambda).$$
Also notice that $a$ and $b$ in the recurrence relation (\ie~ \equref{equ:ton-poly}) are differentiable, it is feasible to learn a proper $a$ and $b$ while training in {\model}.

    \begin{table*}[th!]
    \caption{Statistics of Real-World Datasets.}
    \label{tab:exp-sta}

    \begin{center}
    \begin{tabular}{ccccccccccc}

    \toprule
      & Cora & Citeseer & Pubmed & Computers & Photo & Chameleon & Squirrel & Actor & Texas & Cornell  \\
    \midrule
    Nodes & 2708 & 3327 & 19717 & 13752 & 7650 & 2277 & 5201 & 7600 & 183 & 183 \\
    Edges & 5278 & 4552 & 44324 & 245861 & 119081 & 31371 & 198353 & 26659 & 279 & 277 \\
    Features & 1433 & 3703 & 500 & 767 & 745 & 2325 & 2089 & 932 & 1703 & 1703 \\
    Classes & 7 & 6 & 5 & 10 & 8 & 5 & 5 & 5 & 5 & 5 \\
    homophily & 0.825 & 0.718 & 0.792 & 0.802 & 0.849 & 0.247 & 0.217 & 0.215 & 0.057 & 0.301 \\
    \bottomrule 
    
    \end{tabular}
    \end{center}
\end{table*}

\begin{table}[t!]
    \caption{Average Loss in Learning Filters Experiments.}
    \label{tab:exp-res-fit}
    \vspace{4pt}
    \begin{center}
    \begin{tabular}{p{1.2cm}p{0.9cm}p{0.9cm}p{0.9cm}p{0.9cm}p{0.9cm}}

    \toprule
     & LOW & HIGH & BAND & REJECT & COMB  \\
    \midrule
    GPRGNN    & $0.4169$ & $0.0943$ & $3.5121$ & $3.7917$ & $4.6549$ \\
    ARMA    & $1.8478$ & $1.8632$ & $7.6922$ & $8.2732$ & $15.121$ \\
    ChebNet & $0.8220$ & $0.7867$ & $2.2722$ & $2.5296$ & $4.0735$ \\
    BernNet & $0.0314$ & $0.0113$ & $0.0411$ & $0.9313$ & $0.9982$ \\
    JacobiConv & $\bm{0.0003}$ & $0.0011$ & $0.0213$ & $\bm{0.0156}$ & $0.2933$ \\
    \midrule
    {TON} & $\bm{0.0003}$ & $\bm{0.0003}$ & $\bm{0.0156}$ & $\bm{0.0156}$ & $\bm{0.2870}$ \\
    \bottomrule

    \end{tabular}
    \end{center}
    \vskip -0.1in
    \end{table}
    
\subsection{Put Them Together}
\label{subsec:lon-together}

By the combination of the discussion in \secref{subsec:lon-orthnorm} and \secref{subsec:lon-learnable}, we propose {\model}.
{\model} employs the orthonormal Jacobi polynomial basis to achieve an equal regularization on filter values, and enables learnable parameters in the Jacobi polynomials such that more proper $a$ and $b$ can be achieved.

\stitle{Implementation.}
In the implementation, {\model} employs an MLP as the embedding of features following \cite{iclr21gprgnn,nips21bernnet}.
Following \cite{icml22jacobi}, we also implement multiple filters to flexibly adapt to multiple feature channels.

\begin{table*}[t!]
\centering
    \caption{Experimental Results on Real-World Datasets.}
    \label{tab:exp-real-world-res}

    \begin{tabular}{ccccccccccc}

    \toprule
     Datasets & GCN & APPNP & ChebNet & GPRGNN & BernNet & JacobiConv & {\model}  \\
    \midrule
    Cora & $87.14_{\pm 1.01}$ & $88.14_{\pm 0.73}$ & $86.67_{\pm 0.82}$ & $88.57_{\pm 0.69}$ & $88.52_{\pm 0.95}$ & $88.98_{\pm 0.46}$ & $\bm{89.44_{\pm 1.12}}$ \\
    Citeseer & $79.86_{\pm 0.67}$ & $80.47_{\pm 0.74}$ & $79.11_{\pm 0.75}$ & $80.12_{\pm 0.83}$ & $80.09_{\pm 0.79}$ & $80.78_{\pm 0.79}$ & $\bm{81.41_{\pm 1.15}}$  \\
    Pubmed & $86.74_{\pm 0.27}$ & $88.12_{\pm 0.31}$ & $87.95_{\pm 0.28}$ & $88.46_{\pm 0.33}$ & ${88.48_{\pm 0.41}}$ & ${89.62_{\pm 0.41}}$ & $\bm{90.98_{\pm 0.64}}$ \\
    Computers & $83.32_{\pm 0.33}$ & $85.32_{\pm 0.37}$ & $87.54_{\pm 0.43}$ & $86.85_{\pm 0.25}$ & $87.64_{\pm 0.44}$ & $90.39_{\pm 0.29}$ & $\bm{90.93_{\pm 0.82}}$ \\
    Photo & $88.26_{\pm 0.73}$ & $88.51_{\pm 0.31}$ & $93.77_{\pm 0.32}$ & $93.85_{\pm 0.28}$ & $93.63_{\pm 0.35}$ & $\bm{95.43_{\pm 0.23}}$ & $94.66_{\pm 0.52}$ \\
    Chameleon & $59.61_{\pm 2.21}$ & $51.84_{\pm 1.82}$ & $59.28_{\pm 1.25}$ & $67.28_{\pm 1.09}$ & $68.29_{\pm 1.58}$ & $\bm{74.20_{\pm 1.03}}$ & $73.00_{\pm 2.20}$ \\
    Actor & $33.23_{\pm 1.16}$ & $39.66_{\pm 0.55}$ & $37.61_{\pm 0.89}$ & $39.92_{\pm 0.67}$ & $\bm{41.79_{\pm1.01}}$ & $41.17_{\pm 0.64}$ & $39.10_{\pm1.59}$ \\
    Squirrel & $46.78_{\pm 0.87}$ & $34.71_{\pm 0.57}$ & $40.55_{\pm 0.42}$ & $50.15_{\pm 1.92}$ & $51.35_{\pm 0.73}$ & $57.38_{\pm 1.25}$ & $\bm{60.61_{\pm 1.69}}$ \\
    Texas & $77.38_{\pm 3.28}$ & $90.98_{\pm 1.64}$ & $86.22_{\pm 2.45}$ & $92.95_{\pm 1.31}$ & $93.12_{\pm 0.65}$ & $\bm{93.44_{\pm 2.13}}$ & $87.54_{\pm 3.45}$ \\
    Cornell & $65.90_{\pm 4.43}$ & $91.81_{\pm 1.96}$ & $83.93_{\pm 2.13}$ & $91.37_{\pm 1.81}$ & $92.13_{\pm 1.64}$ & $\bm{92.95_{\pm 2.46}}$ & $84.47_{\pm 3.45}$ \\
    \bottomrule  
    
    \end{tabular}
    \vskip -0.2in
\end{table*}

\section{Experiment}
This section presents the evaluation of the performance of {\model}.
We first use {\model} to learn the filters of the image datasets, then show the performance of {\model} on real datasets, report the ablation analysis of {\model}, and finally propose the evaluation on time cost.
All experiments are conducted on a machine with $4$ NVIDIA V100 32GB GPUs, Intel Xeon 96-core 2.50GHz CPU, and 400GB RAM. 
\ifsourcecode
Our code is available at \url{https://github.com/TaoLbr1993/LON-GNN}.
\else
\fi

\subsection{Evaluation on Learning Filters}

Following previous work~\cite{nips21bernnet,icml22jacobi}, we use the proposed models to learn the filters from $50$ images. 
Each image is treated as a graph, where every pixel represents a node and two neighboring pixels have an edge between the corresponding nodes.
$5$ typical filters, \ie low ($\exp\{-10\lambda^2\}$), high ($1-\exp\{-10\lambda^2\}$), band ($\exp\{-10(\lambda-1)^2\}$) and rejection ($|\sin{\pi\lambda}|$), are applied to the graph signal and the average square loss of each type of filters is reported.
We compare {\model} with previous models based on Spectral GNNs, including GPRGNN~\cite{iclr21gprgnn}, ARMA~\cite{pami22arma}, ChebNet~\cite{nips16chebyshev}, BernNet~\cite{nips21bernnet}, and JacobiConv~\cite{icml22jacobi}.

The implementation and searching space of {\model} follow a setting in ~\cite{icml22jacobi}, \ie the MLP and all dropout are removed and the coefficients of the polynomial are fixed as the initialization values.
Following ~\cite{icml22jacobi}, we use optuna~\cite{kdd19optuna} to choose the best parameters.
The learning rate and weight decay for parameters except for $a$ and $b$ are chosen from $\{\expnumber{5}{-4}, 0.001, 0.005, 0.01, 0.05\}$ and $\{0.0, \expnumber{5}{-5}, \expnumber{1}{-4}, \expnumber{5}{-4}, 0.001\}$, respectively.
The learning rate and weight decay of $a$ and $b$ is fixed as $0.0002$ and $0.0$ respectively.

\tabref{tab:exp-res-fit} reports the average square loss of the baselines and {\model}.
{\model} obtains the same loss with the best baseline JacobiConv on LOW and REJECT, and outperforms all baselines on HIGH, BAND and COMB.
Comparing with the baselines, {\model} decreases the loss by at least $72.7\%$, $26.8\%$, and $2.2\%$ on HIGH, BAND, and COMB, respectively.
Specifically, {\model} achieves a notable improvement on HIGH and BAND.
This implies that the spectral GNNs with learnable parameters have a more fitting capability compared with the baselines.

\begin{table*}[t!]
    \caption{Accuracy and Improvements of Jacobi+OrthNorm, Jacobi+Lernable and LON-GNN compared with SOTA baseline.}
    \label{tab:exp-ablation}

    \begin{center}
    \begin{tabular}
    {c|c|cc|cc|cc}
    \toprule
      & JacobiConv & \multicolumn{2}{c|}{Jacobi+OrthNorm} & \multicolumn{2}{c|}{Jacobi+Learnable} & \multicolumn{2}{c}{LON-GNN}  \\
      & Accuracy & Accuracy & Improvement & Acuracy & Improvement & Accuracy & Improvement \\
    \midrule
    Cora & $88.98$ & $88.52$ & $\downarrow 0.46$ & $89.28$ & $\uparrow 0.30$ & $89.44$ & $\uparrow 0.46$ \\
    Citeseer & $80.78$ & $80.93$ & $\downarrow 0.15$ & $81.64$ & $\uparrow 0.86$ & $81.41$ & $\uparrow 0.63$ \\
    Pubmed & $89.62$ & $89.93$ & $\uparrow 0.29$ & $89.93$ & $\uparrow 0.29$ & $90.98$ & $\uparrow 1.36$ \\
    Computers & $90.39$ & $90.51$ & $\uparrow 0.12$ & $90.66$ & $\uparrow 0.27$ & $90.93$ & $ \uparrow 0.54$ \\
    Photo & $95.43$ & $94.60$ & $\downarrow 0.83$ & $95.40$ & $\downarrow 0.03$ & $94.66$ & $\downarrow 0.77$ \\
    Chameleon & $74.20$ & $73.70$ & $\downarrow 0.50$ & $74.51$ & $\uparrow 0.31$ & $73.00$ & $\downarrow 1.20$ \\
    Actor & $41.17$ & $41.42$ & $\uparrow 0.25$ & $40.73$ & $\downarrow 0.44$ & $39.10$ & $\downarrow 2.07$ \\
    Squirrel & $57.38$ & $60.11$ & $\uparrow 2.73$ & $57.65$ & $\uparrow 0.27$ & $60.61$ & $\uparrow 3.23$ \\
    Texas & $93.44$ & $84.43$ & $\downarrow 9.01$ & $93.44$ & $\rightarrow 0.00$ & $87.52$ & $\downarrow 6.92$ \\
    Cornell & $92.95$ & $83.83$ & $\downarrow 9.12$ & $91.91$ & $\downarrow 1.04$ & $84.47$ & $\downarrow 7.44$ \\
    \midrule 
    \tabincell{c}{Improved Datasets} & - & \multicolumn{2}{c|}{5} & \multicolumn{2}{c|}{6} & \multicolumn{2}{c}{5} \\
    \bottomrule  
    
    \end{tabular}
    \end{center}
\end{table*}

\subsection{Evaluation on Real-World Datasets}
We also evaluate the performance of the proposed models on both homogeneous and heterogenous real-world datasets, the statistics of which are presented in \tabref{tab:exp-sta}.
The homophily of a graph is defined as the ratio of the number of edges connecting nodes with the same label to the number of all edges.
Depending on the level of homophily, datasets can be categorized into homogeneous and heterogeneous ones~\cite{nips20beyond-homo}.
For homogeneous datasets, we use the citation datasets Cora, Citeseer, and Pubmed~\cite{AImag08collect}, and the Amazon co-purchase datasets Computers and Photo~\cite{corr18pitfalls}.
For heterogeneous datasets, we choose the wikipedia datasets Chameleon and Squirrel~\cite{jcnetwork21multiscale}, the Actor co-occurrence datasets~\cite{iclr20geom}, and the WebKB datasets\footnote{http://www.cs.cmu.edu/afs/cs.cmu.edu/project/theo-11/www/wwkb/}.

Following previous works~\cite{iclr21gprgnn,nips21bernnet,icml22jacobi}, we randomly split the nodes into disjoint train/validation/test sets with ratio 60\%/20\%/20\%.
Expriments on each dataset are repeated 10 times and the average accuracy and $\pm 95\%$ confidence interval on test node sets are reported.
We compare {\model} with GCN~\cite{iclr17gcn}, APPNP~\cite{iclr19appnp}, ChebNet~\cite{nips16chebyshev}, GPRGNN~\cite{iclr21gprgnn}, BernNet~\cite{nips21bernnet}, and JacobiConv~\cite{icml22jacobi}.
In addition to the learning rate and weight decay in vanilla spectral GNNs, the learning rate of $a$ and $b$ is also considered in searching space and chosen from $\{\expnumber{5}{-4}, 0.001, 0.005, 0.01, 0.05\}$.

\tabref{tab:exp-real-world-res} shows the experimental results on real-world datasets.
Among $10$ dataset, {\model} beats all baselines on $5$ datasets, \ie Cora, Citeseer, Pubmed, Computers and Squirrel, and achieves the runner-up performance on Photo and Chameleon, which is competitive to the baselines.
Specifically, {\model} outperforms the baselines by at least $1.36\%$ on the homogeneous dataset Pubmed and $5.63\%$ on the heterogeneous dataset Squirrel.
The experimental results imply the competitiveness of {\model} on both homogeneous and heterogeneous datasets.

\subsection{Ablation Analysis}

To analyze the effectiveness of different techniques in {\model}, we compare {\model} with spectral GNNs with Jacobi polynomial basis (\ie JacobiConv in \cite{icml22jacobi}, with orthonormal Jacobi basis (namely Jacobi+OrthNorm), and with learnable Jacobi basis (namely Jacobi+Learnable).

\tabref{tab:exp-ablation} illustrates the performance of the $4$ models.
On $4$ datasets (Cora, Pubmed, Computers and Squirrel), Jacobi+OrthNorm and Jacobi+Learnable outperform JacobiConv and {\model} performs the best.
By the utilization of orthonormal basis and learnable parameters, all of the three algorithms, \ie Jacobi+OrthNorm, Jacobi+Learnable and LON-GNN, outperform JacobiConv on at least $5$ real-world datasets.
The experimental results implies the effectiveness of the proposed techniques in the paper.

\begin{table*}[t!]
    \caption{Experimental Results on Time Cost: Average Time per Epoch (ms)/Total Training Time (s).}
    \label{tab:exp-time-cost}

    \begin{center}
    \begin{tabular}{ccccccccccc}

    \toprule
     Datasets & GCN & APPNP & ChebNet & GPRGNN & BernNet & JacobiConv & {\model}  \\
    \midrule
    Cora & $6.16/3.67$ & $9.30/3.41$ & $9.16/3.50$ & $10.55/4.35$ & $32.31/12.97$ & $18.72/8.53$ & $37.29/14.09$ \\
    Citeseer & $6.81/1.66$ & $9.79/3.30$ & $12.59/3.12$ & $10.37/2.37$ & $33.01/9.13$ & $19.70/4.25$ & $33.81/12.84$  \\
    Pubmed & $7.03/4.66$ & $9.82/5.45$ & $12.30/7.72$ & $10.34/6.78$ & $33.82/22.30$ & $19.98/18.36$ & $39.18/28.15$ \\
    Computers & $8.73/5.40$ & $11.97/7.39$ & $38.53/24.77$ & $12.55/9.23$ & $55.65/36.84$ & $22.86/19.09$ & $41.87/32.11$ \\
    Photo & $6.80/5.24$ & $9.41/6.90$ & $20.77/16.78$ & $10.03/6.11$ & $34.24/25.55$ & $18.99/16.31$ & $37.32/30.28$ \\
    Chameleon & $6.13/4.43$ & $8.53/4.68$ & $27.81/11.87$ & $9.04/8.45$ & $24.91/15.19$ & $20.24/18.17$ & $42.24/38.09$ \\
    Actor & $7.01/1.62$ & $8.78/3.89$ & $18.05/4.37$ & $9.75/2.91$ & $25.05/8.17$ & $18.79/5.07$ & $39.52/15.67$ \\
    Squirrel & $7.55/3.86$ & $8.35/4.08$ & $87.14/38.14$ & $8.96/7.18$ & $31.21/30.52$ & $18.02/17.79$ & $39.04/36.49$ \\
    Texas & $6.20/3.61$ & $9.47/4.20$ & $7.68/4.37$ & $10.37/2.77$ & $32.89/12.35$ & $18.25/8.58$ & $37.35/17.11$ \\
    Cornell & $5.94/2.84$ & $9.17/2.93$ & $6.94/4.93$ & $9.81/2.11$ & $29.70/14.39$ & $17.65/9.43$ & $37.18/11.44$ \\
    \bottomrule  
    
    \end{tabular}
    \end{center}
\end{table*}

\subsection{Evaluation on Time Cost}
To evaluate the scalability of {\model}, we test the average time cost per epoch and the total training time of the baselines and {\model} on real-world datasets.

The experimental results are reported in \tabref{tab:exp-time-cost}.
{\model} spends more time than the baselines in most cases.
Compared with JacobiConv, {\model} spends up to $2.17\times$ times of time cost per epoch.
This is because JacobiConv utilizes a linear regression to embed the features, while {\model} uses an MLP.
Compared with BernNet, {\model} spends up to $1.70\times$ times of time cost per epoch.
The gap is mainly derived from the extra computation of the norm of orthogonal polynomials and learnable parameters $a$ and $b$.
In consideration of the SOTAs, the time cost of {\model} can be tolerant in practice.

\section{Related Work}
\label{sec:rel}
In recent years, graph neural networks (GNN)~\cite{tnnls20survey-gnn,aiopen2020gnn-survey} have achieved remarkable progress in graph representation learning~\cite{hamilton2020grl}. Its successful applications cover various levels of tasks~\cite{iclr17gcn,arxiv17gat,nips18seal,aaai18graph-class}. In literature, the motivations behind GNN are often summarized from three different perspectives: graph signal processing~\cite{pieee18gsp}, message passing-based inference in probabilistic graphical models~\cite{icml16discrim}, and Weisfeiler-Lehman (WL) graph isomorphism test~\cite{iclr19powerful,aaai19wlgo}.

On the one hand, the first perspective leads to spectral GNNs, which can be roughly categorized into GNNs with fixed filters and GNNs with adaptive filters. Some pioneering studies such as ChebNet~\cite{nips16chebyshev}, GCN~\cite{iclr17gcn}, and SGC~\cite{icml19sgc} adopt a fixed filter. To be capable of fitting arbitrary graph filters, recently proposed spectral GNNs utilize polynomial approximation~\cite{nips21bernnet,nips22evennet,icml22jacobi}. Our proposed LON-GNN is in this line, which leverages Jacobi polynomials.

On the other hand, GNNs motivated from the third perspective are instead regarded as spatial GNNs. The expressiveness of these GNNs is often analyzed by comparing them to the WL-test~\cite{iclr19powerful}, and recent works explore higher-order WL-tests or even more complicated graph algorithms~\cite{pami23subgraph,nips21local}.

There are also other perspectives for interpreting GNNs, such as random walks~\cite{iclr19appnp,iclr18fastgcn}, optimizing an energy function~\cite{icml21iterative}, and geometric deep learning~\cite{icml21breaking,icml19universial}.

The generalization of GNNs can now be analyzed with neural tangent kernel~\cite{nips19gntk}, and applicability to large-scale graphs is empowered with various graph sampling algorithms~\cite{nips17graphsage,iclr20graphsaint}.

Besides, there are some works that introduce the attention mechanisms to GNN to weight neighbors of nodes~\cite{arxiv17gat,corr18attention}, meta-paths~\cite{nips19gtn}, \etc

Compared with these works, we propose {\model} from the perspective of designing a learnable polynomial basis and regularization on the output values of filters.

\section{Conclusion and Future Directions}
\label{sec:con}
In this paper, we first identify the so-called over-passing issue and provide empirical and theoretical evidence. To alleviate this issue, namely,  regularizing the output values of a graph filter, we propose a principled way of regularizing the filter function. Meanwhile, we show that trivially regularizing polynomial coefficients, which most existing spectral GNNs adopt, could be more theoretically sound. All these naturally motivate us to propose {\model}, a spectral GNN that optimizes not only the parameters of vanilla spectral GNNs as in previous works but also the parameters of Jacobi polynomials such that suitable orthonormal basis can be effectively learned and regularized.
Experiments on fitting filters and node classification tasks validate the superiority of {\model}.

An interesting direction for future works is to make more parameters in spectral GNNs learnable, like the highest degree of polynomial $K$, so that more flexible GNNs could be achieved.
c
\nocite{langley00}

\clearpage
\bibliography{longnn}
\bibliographystyle{icml2023}

\newpage
\appendix
\onecolumn

\section{Details of Experiments}
\label{sec:app-details-exp}

\stitle{Baselines.} 
We directly report the results of JacobiConv following ~\cite{icml22jacobi}, and the results of other baselines are from ~\cite{nips21bernnet}.

\stitle{Hyperparameter Search.}
For the experiments on real-world datasets and fitting filters, we use optuna~\cite{kdd19optuna} to search the best hyperparameters for spectral GNNs with orthonormal basis.
We choose the best parameters with minimum average square loss for fitting filter experiments and maximum average accuracy on validation datasets for real-world datasets.
Following ~\cite{icml22jacobi}, the learning rate and weight decay for parameters are chosen from $\{\expnumber{5}{-4}, 0.001, 0.005, 0.01, 0.05\}$ and $\{0.0, \expnumber{5}{-5}, \expnumber{1}{-4}, \expnumber{5}{-04}, 0.001\}$, respectively.
We choose $a$ and $b$ in Jacobi polynomials from $[-1,2]$.

For fitting filter experiments, we simply fix the best parameters of other parameters and search the learning rate of $a$ and $b$ from $\{0.01, 0.001, \expnumber{5}{-4}, \expnumber{2}{-4}, \expnumber{1}{-4} \}$ in fitting filter experiments.
For real-world datasets, we also search the learning rate of $a$ and $b$ in optuna, whose values are also chosen from $\{\expnumber{5}{-4}, 0.001, 0.005, 0.01, 0.05\}$. 
The weight decay of $a$ and $b$ is fixed to $0.0$.

\begin{table}[t!]
    \caption{Average Loss of {\model} with Different Learning Rate in Learning Filters Experiments.}
    \label{tab:full-exp-fit}

    \begin{center}
    \begin{tabular}{cccccc}

    \toprule
     & LOW & HIGH & BAND & REJECT & COMB  \\
    \midrule
    0.01    & $0.0003$ & $0.0008$ & $0.0172$ & $0.0156$ & $0.2868$ \\
    0.001    & $0.0003$ & $0.0004$ & $0.0158$ & $0.0156$ & $0.2869$ \\
    0.0005 & $0.0003$ & $0.0003$ & $0.0157$ & $0.0156$ & $0.2869$ \\
    0.0002 & $0.0003$ & $0.0003$ & $0.0156$ & $0.0156$ & $0.2870$ \\
    0.0001 & $0.0003$ & $0.0003$ & $0.0156$ & $0.0156$ & $0.2870$ \\
    \bottomrule

    \end{tabular}
    \end{center}
    \vskip -0.1in
    \end{table}

\stitle{Results of Fitting Filters.}
The full experimental results of Fitting Filters are reported in \tabref{tab:full-exp-fit}.

\end{document}